\newtheorem{mydef}{Definition}
\newtheorem{theo}{Theorem}
\newtheorem{col}{Corollary}
\newtheorem{prop}{Proposition}
\begin{document}
\title{Relational inductive biases on attention mechanisms\thanks{Original version published in Spanish into the journal \emph{Rsearch in Computing Science} \url{https://www.rcs.cic.ipn.mx/} }}
\titlerunning{~}

\author{V\'ictor Mijangos$~^1$, Ximena Gutierrez-Vasques$~^2$, Ver\'onica E. Arriola$~^1$, Ulises Rodr\'iguez-Dom\'inguez$~^1$, Alexis Cervantes$~^1$, Jos\'e Luis Almanzara$~^1$}
%
\institute{$~^1$Facultad de Ciencias, UNAM;  $~^2$CEIICH, UNAM 
\\
\email{\{vmijangosc, v.arriola, ulises.rodriguez.dominguez, alexis.cervantes, jose-luis\}@ciencias.unam.mx}, xim@unam.mx}

\maketitle              

\begin{abstract}
Inductive learning aims to construct general models from specific examples, guided by biases that influence hypothesis selection and determine generalization capacity. In this work, we focus on characterizing the relational inductive biases present in attention mechanisms, understood as assumptions about the underlying relationships between data elements. From the perspective of geometric deep learning, we analyze the most common attention mechanisms in terms of their equivariance properties with respect to permutation subgroups, which allows us to propose a classification based on their relational biases. Under this perspective, we show that different attention layers are characterized by the underlying relationships they assume on the input data.

\keywords{attention mechanisms, transformers, inductive biases, geometric deep learning}
\end{abstract}

\section{Introduction}

One of the most common paradigms in current machine learning methods is \textit{inductive learning}, whose goal is to construct a general function, or hypothesis, from observed particular examples \cite{mitchell1980need,mitchell1997machine}. In this way, given a training set (input-output pairs), the model seeks a hypothesis, within a space of possible hypotheses, that fits the training data well and can generalize to instances that have not been seen before.

The assumptions made by a learning algorithm to propose hypotheses, defined over the entire problem domain and not only the values of the observed instances, constitute an \textit{inductive bias}. These assumptions are what give the model the potential to generalize to unseen data \cite{mitchell1997machine}. Another form of inductive bias includes assumptions that prefer certain hypotheses over others. Therefore, inductive biases play a key role in the generalization ability of machine learning models.

Although inductive biases are one of the components that enable learning in various tasks (e.g., natural language processing, image recognition and generation) it is difficult to find works that formally address the foundations of inductive biases, particularly in deep learning.

In this paper, we focus on explaining the operation of one of the most prominent types of inductive biases, namely \emph{relational biases}, which exploit the inherent relational structure of data \cite{battaglia2018relational}. Our analysis is applied to transformer neural networks and their attention mechanisms. We propose adopting strategies from geometric deep learning to describe relational biases, as they provide a robust framework for modeling relational structures in data, as well as transformations and symmetries. Our main contributions are: i) to propose a theoretical framework of relational inductive biases through geometric deep learning; ii) to formally show that attention layers in transformers are part of the graph neural network (GNN) framework; iii) to prove the inductive biases assumed by the most common attention layers in literature; iv) to show that the graph structure can be implemented through masking; and v) to propose a hierarchy of attention mechanisms. 

\section{Theoretical Framework}

\subsection{Relational Inductive Biases}

Inductive biases are \textit{a priori} assumptions that help learning algorithms to choose one hypothesis over another \cite{mitchell1980need}. Tom Mitchell \cite{mitchell1997machine} suggests posing the question of what \textit{a priori} assumptions are needed for the learning algorithm to carry out a deductive process to generalize over a new instance $x$. These \textit{a priori} assumptions are precisely what is meant by inductive biases.

\begin{mydef}[Inductive Bias] \label{def:InductiveBias}
Let $f^*: X \to Y$ be an arbitrary target function, and let $D = \{(x,f^*(x)) : x \in X\}$ be a training dataset, and $A$ a learning algorithm. An inductive bias of $A$ is a minimal set of assumptions $B$ such that for all $x \in X$:\footnote{We use Mitchell’s notation \cite{mitchell1997machine}, taking $B$ and $D$ as conjunctions over their elements viewed as propositions.}
$$(B \land D \land x) \vdash A(x, D)$$
Where $A(x, D)$ is the prediction of the algorithm trained on $D$ for instance $x$, and $\vdash$ denotes the inference of $A(x,D)$ from $(B \land D \land x)$.
\end{mydef}

In particular, we are interested in relational inductive biases \cite{battaglia2018relational} to analyze data composed of sets of entities that are structurally related to each other. For example, let $X$ be a set of sentences in Spanish, each instance $x$ is made up of the set of words ${x_1,...,x_n}$ in the sentence.

\begin{mydef}[Relational Inductive Bias]
Given a set of entities that make up an instance $x = \{x_1,...,x_n\}$, a relational inductive bias is an assumption about the relations among those entities. That is, it is a relational structure $(x,G)$, such that $G = (V,E)$ is a graph relating the elements of $x$.
\end{mydef}

\subsection{Geometric Deep Learning}

To study relational inductive biases within \textbf{attention mechanisms}, it is crucial to adopt a theoretical framework that allows us to characterize and relate the different mechanisms that have been proposed. Bronstein et al. \cite{bronstein2021geometric} adopt an approach based on geometric features and their symmetry groups. Papillon et al. \cite{papillon2023architectures} have extended this approach to a topological perspective. Gavranović et al. \cite{gavranović2024categoricaldeeplearning} propose a more general framework based on category theory to encompass both geometric and topological perspectives (which they call “top-down”) and “bottom-up” frameworks, which are based on constructing architectures via automatic differentiation methods \cite{abadi2016tensorflow}, \cite{paszke2019pytorch}, \cite{bradbury2018jax}. We focus on the relations established by a graph assumed within attention models. We adopt the theory of geometric deep learning \cite{bronstein2017geometric}, since this theoretical model, by focusing on geometric structures in the data domain, is ideal for expressing relationships between data entities.

Geometric deep learning adopts a study methodology based on Felix Klein’s program for geometry \cite{kisil2012erlangen}: a set or domain of elements is defined along with groups of transformations associated with this domain. Based on this idea, the goal of geometric deep learning is to determine characteristics (mainly relational) of the data domain and to determine the type of hidden layers that can leverage these characteristics.

The essential idea behind geometric deep learning lies in studying instances $x$ and the relationships among their entities $x_i$. These are usually represented by graphs $G = (V,E)$ where the vertices in $V$ are associated with entities and $E$ represents their relationships. The data domain and the task determine the types of layers in a deep model. These layers must preserve the relevant information needed to solve the tasks. To formalize this, the concept of an equivariant function is key.\footnote{A particularly important example is convolutional layers. In these, the domain consists of images, which can be seen as elements $x \in \mathbb{R}^{H\times W \times C}$, where $H$ is height, $W$ is width, and $C$ is the number of channels. The action of a translation $g \in \mathcal{G}$ can be represented by a translation matrix $T = \rho(g)$, such that $Tx$ is a translated version of the image $x$. A convolution $f: \mathbb{R}^{H\times W \times C} \to \mathbb{R}^{H\times W \times C}$ is equivariant to translations if the equality $f(Tx) = Tf(x)$ holds.}

\begin{mydef}[Equivariant Function] \label{def:equivariantFunction}
Let $\mathcal{G}$ be a symmetry group over a set $X$, a function $f:X \to X$ is said to be $\mathcal{G}$-equivariant if for every group action $g \in \mathcal{G}$ the following holds:
\begin{equation} \label{eq;equivariance}
f\big( \rho(g)x \big) = \rho(g) f(x)
\end{equation}
Where $\rho(g)$ is the (matrix) representation of the action $g$.
\end{mydef}

Another important concept within this theoretical framework is that of invariance and invariant functions, where $f\big(\rho(g)x\big) = f(x)$.
Given the characteristics of attention mechanisms, we focus on $\mathcal{G}$-equivariant functions, or simply equivariant functions. The geometric deep learning framework allows for defining many types of layers, mainly for graph neural network architectures. We will see that attention layers fall under the concept of a graph layer.

\begin{mydef}[Graph Layer] \label{def:GraphLayer}
A graph layer (or message-passing layer) depends on a graph $G = (V,E)$ with a neighborhood system $\mathcal{N}_v$ for each vertex associated with an entity $x_v$, such that the hidden representation of that entity $h_v$ has the form:
\begin{equation} \label{eq:InvEqFunction}
h_v = \phi\Big( x_v, \bigoplus{u \in \mathcal{N}_v} \psi(x_v, x_u) \Big)
\end{equation}
Where the following elements are distinguished:
\begin{enumerate}
\item Message function $\psi(x_v, x_u)$: Generates a message (usually a vector) based on $x_v$ and its neighbors $x_u$.
\item Aggregation function $\bigoplus$: Determines how messages are combined for the update; it is required to be a commutative operator.
\item Update function $\phi(\cdot, \cdot)$: Defines the final form of the new representation $h_v$ based on the original representation $x_v$ and the relational structure.
\end{enumerate}
\end{mydef}

In Section~\ref{sec:results}, we study the equivariances of attention mechanisms using subgroups of finite permutations $S_n$. To this end, we take as a starting point Cayley’s theorem, which states that every group is isomorphic to a permutation subgroup \cite{cayley1854vii}.

\subsection{Attention Mechanisms} \label{sec:AttentionMech}

Attention mechanisms were introduced for the problem of machine translation in recurrent networks by Bahdanau et al. \cite{bahdanau2014neural}.
Later, Vaswani et al. \cite{vaswani2017attention} proposed replacing recurrences with attention mechanisms that operate on the same input data, which they called self-attention.
Following these works, new architectures and attention layers have been defined, of which we present the most common ones here, starting with self-attention \cite{vaswani2017attention}.

\begin{mydef}[Self-Attention] \label{def:selfAttention}
Given a set of entities from an instance $\{x_1,..,x_n\}$ with $x_i \in \mathbb{R}^d$, a self-attention mechanism is a layer of the form:
\begin{equation} \label{eq:selfAttention}
h_i = \sum_{j} \alpha(x_i, x_j) \psi_v(x_j)
\end{equation}
where $\alpha(x_i, x_j)$ are the attention weights defined as:
\begin{equation} \label{eq:attentionWeights}
\alpha(x_i, x_j) = Softmax\Big(\frac{\psi_k(x_j)^T \psi_q(x_i)}{\sqrt{d}}\Big)
\end{equation}
We denote $\psi_q, \psi_k$, and $\psi_v$ as the projections applied to the input entities, generally defined by a linear or affine function.
\end{mydef}

Transformers \cite{vaswani2017attention} integrate a type of attention similar to that of Bahdanau \cite{bahdanau2014neural} which represents input entities (in the encoder) with output entities (in the decoder). This encoder-decoder attention can be defined using the concepts of Definition~\ref{def:selfAttention}, restricting how the relationships are determined.

\begin{mydef}[Encoder-Decoder Attention] \label{def:encDecAttention}
Suppose a bipartition determined by the sets of entities $X = \{x_1,...,x_m\}$ and $Y = \{y_1,..., y_n\}$. We define encoder-decoder attention as the mechanism that, for each $y_i$, with $i \in \{1,...,n\}$, obtains hidden representations as:
\begin{equation} \label{eq:encDecAttention}
h_i = \sum_{j=1}^m \alpha(y_i, x_j) \psi_v(x_j)
\end{equation}
Where $\alpha(y_i,x_j)$ is defined similarly to Equation~\ref{eq:attentionWeights}.
\end{mydef}

When attention mechanisms are used to predict an element (entity) given a previous set (as in text generation), the training of these models cannot assume that previous elements depend on future ones, which are not yet known. To handle this, decoders in transformers implement masked attention mechanisms \cite{vaswani2017attention}.

\begin{mydef}[Masked Attention] \label{def:maskedAttention}
Given an input set of ordered entities $x_1,..,x_n$, a masked attention mechanism is a layer that obtains representations of the form:
\begin{equation} \label{eq:maskedAttention}
h_i = \sum_{j \leq i} \alpha(x_i, x_j) \psi_v(x_j)
\end{equation}
where the attention weights $\alpha(x_i, x_j)$ are estimated as in Equation~\ref{eq:attentionWeights}.
\end{mydef}

In the previous definition, we have imposed an order to simplify the expression in the summation.
The idea of disconnecting nodes within the graph that defines relationships among entities in an attention mechanism can be extended further. For example, Child et al. \cite{child2019generating} suggest a strided attention mechanism where the relationships are bounded by various constraints, such as a limit on how many previous elements an entity can connect to.

\begin{mydef}[Strided Attention] \label{def:strideAttention}
Strided attention is a type of sparse attention where, given an input of ordered entities $x_1,...,x_n$, their representations are obtained as:
\begin{equation}
h_i = \sum_{t \leq j \leq i} \alpha(x_i, x_j) \psi_v(x_j)
\end{equation}
where $t = \max\{0,i-k\}$ for some constant $k$, and $\alpha(x_i, x_j)$ is defined as in Equation~\ref{eq:attentionWeights}.
\end{mydef}

Another general way to define attention mechanisms is to assume that the relationships among entities in each instance are arbitrary and are defined by the adjacency matrix of a graph \cite{velickovic2018graphattentionnetworks}.

\begin{mydef}[Graph Attention] \label{def:graphAttention}
Given entities $\{x_1,..,x_n\}$ of a data point along with information about their neighbors $\mathcal{N}_i$ for all $i$, a graph attention mechanism is a layer of the form:
\begin{equation} \label{eq:graphAttention}
h_i = \sum_{j \in \mathcal{N}_i} \alpha(x_i, x_j) \psi_v(x_j)
\end{equation}
where $\alpha(x_i, x_j)$ are the attention weights as in Equation~\ref{eq:attentionWeights}.
\end{mydef}

Sparse attention and graph attention differ in that sparse attention assumes arbitrary relationships for all instances in the data domain, whereas in graph attention, the relationships depend on each particular data point. The latter mechanism is the most general form of attention mechanism, from which the previously defined mechanisms can be derived.

\subsection{Attention as a Generalized Kernel}

Tsai et al. \cite{tsai-etal-2019-transformer} propose classifying attention mechanisms based on a kernel centered on the function $\alpha(x_i, x_j)$ from Equation~\ref{eq:attentionWeights}.
From this perspective, attention weights depend on a function $k:X \times X \to \mathbb{R}$, where $X$ is the feature space for the attention mechanisms (including both positional and non-positional features \cite{vaswani2017attention}). In this framework, $k(\cdot, \cdot)$ is a kernel, which in attention mechanisms is exponential, and since the projections $\psi_q, \psi_k$ are generally not symmetric, $k$ is considered a generalized (non-symmetric) kernel.

The kernel-based view \cite{tsai-etal-2019-transformer} does not account for the inductive biases that may exist within these mechanisms.
In what follows, we assume that the attention weights $\alpha(x_i, x_j)$ are determined by the softmax over the kernel values $k(x_i,x_j)$ under a generalized kernel. That is:
\begin{equation} \label{eq:KernelAttention}
\alpha(x_i, x_j) = Softmax_K\big( K_{i,j} \big)
\end{equation}
We do not delve deeply into the consequences of using different types of kernels \cite{tsai-etal-2019-transformer}, since our proposal focuses on relational inductive biases.
However, it is worth noting that this perspective opens the possibility for relational biases that are not necessarily binary. This is a direction we leave for future work.
We emphasize that our proposal is not contrary but complementary to the view of these mechanisms within the framework of generalized kernels.

\section{Results} \label{sec:results}

From the review of different attention mechanisms (Section~\ref{sec:AttentionMech}), we observed that these mechanisms share a general core, involving an additive aggregation of neighboring entity values weighted by a probability typically estimated through a softmax function. This already introduces a relational inductive bias within attention mechanisms.

\begin{prop}[Stochastic Relation Bias]
Attention layers assume that the relationships among entities $x_1,...,x_n$ are stochastic.
\end{prop}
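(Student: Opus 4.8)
The plan is to show that every attention layer from Section~\ref{sec:AttentionMech} can be rewritten as a graph layer (Definition~\ref{def:GraphLayer}) whose message function carries a probability weight, and then to read off the claimed stochastic structure. First I would take the most general mechanism, graph attention (Definition~\ref{def:graphAttention}), and identify the ingredients of Equation~\ref{eq:InvEqFunction}: set $\psi(x_i,x_j) = \alpha(x_i,x_j)\,\psi_v(x_j)$, let $\bigoplus$ be ordinary summation over $\mathcal{N}_i$, and let $\phi$ be the projection onto its second argument (so $h_i = \phi(x_i, \sum_{j\in\mathcal{N}_i}\psi(x_i,x_j))$). All the other mechanisms are then special cases obtained by fixing the neighborhood: $\mathcal{N}_i = \{1,\dots,n\}$ for self-attention, $\mathcal{N}_i = \{j : j \le i\}$ for masked attention, $\mathcal{N}_i = \{j : \max\{0,i-k\}\le j \le i\}$ for strided attention, and the bipartite neighborhood for encoder-decoder attention.

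Next I would make precise the sense in which the weights $\alpha(x_i,x_j)$ define a stochastic relation. By Equation~\ref{eq:attentionWeights} (equivalently Equation~\ref{eq:KernelAttention}), for each fixed $i$ the vector $\big(\alpha(x_i,x_j)\big)_{j\in\mathcal{N}_i}$ is the image of the kernel scores under a softmax, hence every entry lies in $(0,1)$ and $\sum_{j\in\mathcal{N}_i}\alpha(x_i,x_j) = 1$. Thus $\alpha(x_i,\cdot)$ is a probability distribution supported on $\mathcal{N}_i$, and the map $(i,j)\mapsto \alpha(x_i,x_j)$ is exactly a (row-)stochastic matrix on $V\times V$ supported on the edge set $E$ of the underlying graph. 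Consequently the relational structure $(x,G)$ that the layer assumes is not a plain graph but a \emph{weighted} graph whose weights form a stochastic matrix — equivalently, a Markov kernel on the entity set — which is what ``stochastic relation'' means here. I would also note that $h_i = \sum_{j\in\mathcal{N}_i}\alpha(x_i,x_j)\psi_v(x_j) = \mathbb{E}_{j\sim\alpha(x_i,\cdot)}[\psi_v(x_j)]$, so the update is literally an expectation over this relation, reinforcing the interpretation.

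Finally I would tie this back to the definition of a relational inductive bias: since every attention layer computes $h_i$ by aggregating over neighbors with these softmax weights, by Definition~\ref{def:InductiveBias} the minimal assumption needed to derive the layer's output on a new instance $x$ is precisely the relational structure $(x,G)$ together with the stipulation that the edge relation is equipped with a per-vertex probability distribution — i.e.\ a stochastic relation. The main obstacle I anticipate is not technical difficulty but pinning down the right formalization of ``stochastic relation'': one must be careful to state that the relation is row-stochastic (normalized per source vertex $i$, not globally) and that the support of the distribution is exactly $\mathcal{N}_i$, so that the degenerate cases (e.g.\ masked attention with $\mathcal{N}_i=\{1,\dots,i\}$) are covered uniformly. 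Once that definition is fixed, the proof is a direct verification across the five mechanisms listed above, each reducing to the observation that a softmax output is a probability vector.
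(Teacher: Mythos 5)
Your proposal is correct and follows essentially the same route as the paper, which justifies the proposition by observing that the softmax makes each row $\alpha(x_i,\cdot)$ a probability distribution (so $\alpha$ is a row-stochastic matrix with $\alpha_{i,j}=p(x_j|x_i)$), that attention fits the graph-layer template with message $\alpha(x_i,x_j)\psi_v(x_j)$ and summation as aggregation, and that $h_i$ is therefore an expectation over the neighborhood. Your write-up is more explicit than the paper's brief remark (notably on row-normalization and the support being $\mathcal{N}_i$), but the underlying argument is the same.
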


The attention matrix, which we denote as $\alpha$, defines the associated stochastic matrix, where each entry $\alpha_{i,j} := \alpha(x_i,x_j) = p(x_j|x_i)$. This probability is given in terms of the softmax function (Equation~\ref{eq:KernelAttention}).

It is worth noting that attention mechanisms fall within layers of the form presented in Equation~\ref{eq:InvEqFunction} (Definition~\ref{def:GraphLayer}): 
\begin{enumerate}
    \item  the message function is defined as $\psi(x_i, x_j) = \alpha(x_i, x_j) \psi_v(x_j)$;
    \item aggregation is carried out by summing over all entities of a data point; and
    \item the update function obtains the new representations through the aggregation of messages.
\end{enumerate}
Highlighting the graphical and stochastic framework, we propose a general definition of an attention layer.

\begin{mydef}[Attention Layer]
An attention layer is a type of graph layer (Definition~\ref{def:GraphLayer}) that estimates the representation of a set of entities $\{x_1, x_2,...,x_n\}$ with neighborhood system $\{\mathcal{N}_i : i=1,2,...,n\}$ from the expected value over a distribution $p$ of these neighborhoods:
$$h_i = \mathbb{E}_{p\sim \mathcal{N}_i}\big[ \psi_v(x) \big]$$
Where the expectation is estimated over the relationships of an adjacency matrix given as in Equation~\ref{eq:KernelAttention}:
$$\alpha(x_i, x_j) = Softmax_K(K_{i,j})$$
$\psi_v(x)$ is a projection of the data into the value space.
\end{mydef}

In the following, we rely on this definition to demonstrate the relational inductive biases reflected in each type of attention mechanism. In particular, we focus on how these relationships are expressed in the attention matrix, since, as noted, this matrix determines the relationships among entities. To do so, we concentrate on masking processes.

\begin{lemma} \label{lem:Masked}
The underlying relationships in the data domain of an attention mechanism are manifested in the attention matrix as a masking process.
\end{lemma}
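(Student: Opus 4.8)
The plan is to show that every attention mechanism defined in Section~\ref{sec:AttentionMech} can be rewritten as self-attention (Definition~\ref{def:selfAttention}) composed with a Hadamard (entrywise) multiplication of the attention matrix by a binary mask matrix $M \in \{0,1\}^{n\times n}$, where $M_{i,j} = 1$ exactly when $j \in \mathcal{N}_i$ in the graph $G=(V,E)$ that encodes the relational structure. Concretely, I would define the masked attention weights as
\begin{equation} \label{eq:maskForm}
\tilde\alpha(x_i,x_j) = \frac{M_{i,j}\, e^{K_{i,j}}}{\sum_{l} M_{i,l}\, e^{K_{i,l}}},
\end{equation}
so that $\tilde\alpha(x_i,x_j) = 0$ whenever $(x_i,x_j)\notin E$ and the surviving weights are renormalized to a distribution supported on $\mathcal{N}_i$. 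Equivalently, one sets $K_{i,j} \mapsto -\infty$ (in practice a large negative constant) off the edge set before applying $Softmax_K$, which is the standard implementation of masking in transformers. The representation then becomes $h_i = \sum_j \tilde\alpha(x_i,x_j)\psi_v(x_j) = \sum_{j\in\mathcal{N}_i}\alpha(x_i,x_j)\psi_v(x_j)$, i.e.\ exactly the graph attention layer of Equation~\ref{eq:graphAttention}.

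The argument then proceeds case by case, exhibiting the mask that realizes each mechanism. For masked attention (Definition~\ref{def:maskedAttention}) the mask is lower-triangular, $M_{i,j} = 1 \iff j \le i$, corresponding to the graph whose edges point from each entity to its predecessors; for strided attention (Definition~\ref{def:strideAttention}) the mask is banded lower-triangular, $M_{i,j} = 1 \iff \max\{0,i-k\} \le j \le i$; for encoder-decoder attention (Definition~\ref{def:encDecAttention}) the mask is the biadjacency block of the complete bipartite graph between $Y$ and $X$, i.e.\ $M_{i,j}=1$ for all $i \in \{1,\dots,n\}$, $j\in\{1,\dots,m\}$ on the off-diagonal block and $0$ elsewhere; and for ordinary self-attention $M$ is the all-ones matrix, the mask of the complete graph $K_n$. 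In each case I would verify that substituting the corresponding $M$ into Equation~\ref{eq:maskForm} reproduces the summation range in the mechanism's defining equation, which is immediate from the support of $M$ and the renormalization in the denominator.

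Finally, for the converse direction — that an \emph{arbitrary} relational structure $(x,G)$ is manifested this way — I would appeal to the fact (noted after Definition~\ref{def:graphAttention}) that graph attention is the most general mechanism: given any graph $G=(V,E)$ on the entities, taking $M$ to be its adjacency matrix and applying Equation~\ref{eq:maskForm} yields an attention layer whose attention matrix $\tilde\alpha$ has exactly the sparsity pattern of $G$, so the relationships of $G$ appear in $\tilde\alpha$ as the zero/nonzero structure imposed by the mask. The main obstacle, and the point deserving the most care, is the renormalization: masking changes the softmax normalizing constant, so one must check that the masked weights still form a valid stochastic vector on $\mathcal{N}_i$ (consistent with the Stochastic Relation Bias proposition) and that this is well-defined, i.e.\ that $\mathcal{N}_i \neq \emptyset$ for every $i$ — which holds under the mild and standard assumption that self-loops are present (or, for the bipartite case, that every decoder node attends to at least one encoder node). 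I would state this non-degeneracy assumption explicitly and note that it is satisfied by all the mechanisms considered.
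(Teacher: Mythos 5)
Your proposal is correct and rests on exactly the same idea as the paper's proof: the relational graph $G=(V,E)$ is imposed by setting $K_{i,j}=-\infty$ (equivalently, multiplying by a binary mask and renormalizing) for $(i,j)\notin E$ before the softmax, which forces $\alpha_{i,j}=0$ precisely off the edge set. Your case-by-case instantiation of the mask duplicates material the paper defers to Theorems~\ref{bias:selfAttention}--\ref{bias:strideAttention}, and your non-degeneracy remark ($\mathcal{N}_i\neq\emptyset$ so the renormalized weights remain stochastic) is a careful addition the paper's proof omits, but neither changes the route.
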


\begin{proof}
The masking process consists of removing certain entries from the attention matrix $\alpha$. Let $K$ be the product matrix (Equation~\ref{eq:KernelAttention}) to which the softmax function is applied. An entry is masked by assigning $K_{i,j} = -\infty$ before applying the softmax, resulting in $\alpha_{i,j} = 0$. Clearly, this represents a disconnection in the underlying relational graph structure. If $G = (V,E)$, we can define the masking process as:
$$K_{i,j} = \begin{cases}
        k(x_i,x_j) & \text{ if } (i,j) \in E \\
        -\infty & \text{ if } (i,j) \not\in E \\
    \end{cases}$$
Thus, the masking is determined by the assumed relationships in the data domain.
\end{proof}

\begin{theo}[Relational bias in self-attention] \label{bias:selfAttention}
A self-attention layer assumes a relational inductive bias based on a fully connected graph \cite{bronstein2021geometric}.
\end{theo}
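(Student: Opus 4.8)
The plan is to show two things: first, that self-attention (Definition~\ref{def:selfAttention}) is a special case of graph attention (Definition~\ref{def:graphAttention}) in which the neighborhood system is the complete one, $\mathcal{N}_i = \{1,\dots,n\}$ for every $i$; and second, that this complete-graph structure is indeed the \emph{relational inductive bias} of the layer, in the sense of Lemma~\ref{lem:Masked}, namely that no entry of the attention matrix is masked. Concretely, I would start from Equation~\ref{eq:selfAttention}, $h_i = \sum_j \alpha(x_i,x_j)\psi_v(x_j)$, and observe that the sum ranges over \emph{all} indices $j$, with no restriction of the form $j \le i$ (as in masked attention) or $j \in \mathcal{N}_i$ (as in graph attention). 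Hence the associated graph $G = (V,E)$ has $V = \{x_1,\dots,x_n\}$ and $E = V \times V$, i.e.\ $K_{i,j} = k(x_i,x_j)$ for every pair in the masking description of Lemma~\ref{lem:Masked}, so the masking is trivial and the attention matrix $\alpha$ is a full (dense, row-stochastic) matrix.

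The second and more conceptual step is to argue that this is precisely the bias, and the natural way to do that in the geometric-deep-learning framework of the paper is through equivariance. I would show that self-attention, viewed as a map on the tuple $(x_1,\dots,x_n)$, is equivariant with respect to the full symmetric group $S_n$ acting by permuting the entities: for $\sigma \in S_n$ with permutation matrix $\rho(\sigma)$, we have $f(\rho(\sigma)x)_i = \sum_j \alpha(x_{\sigma^{-1}(i)}, x_{\sigma^{-1}(j)})\psi_v(x_{\sigma^{-1}(j)})$, and reindexing the sum by $j' = \sigma^{-1}(j)$ (legitimate since the sum is over all of $\{1,\dots,n\}$) gives $h_{\sigma^{-1}(i)}$, i.e.\ $f(\rho(\sigma)x) = \rho(\sigma)f(x)$. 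This full $S_n$-equivariance is exactly the symmetry of a complete graph: a graph layer on $G$ is equivariant only under the automorphism group of $G$, and $\mathrm{Aut}(K_n) = S_n$ is the \emph{largest} possible permutation group, so assuming no more structure than $S_n$-equivariance is the same as assuming $G = K_n$. Conversely, any proper sub-mechanism (masked, strided, graph attention with $\mathcal{N}_i \subsetneq V$) breaks equivariance down to a proper subgroup, which is the content of the later results in the hierarchy.

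I would then close the loop by invoking Lemma~\ref{lem:Masked} in the other direction: since a complete graph corresponds to the empty mask, and since by that lemma the relational structure of an attention mechanism is \emph{manifested exactly} as its masking pattern, the absence of masking in Equation~\ref{eq:selfAttention} certifies that the assumed relational structure is the complete graph $K_n$. This establishes the stated bias.

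The main obstacle I anticipate is not the computation — the reindexing argument is routine — but making precise the claim that the complete graph is \emph{the} bias rather than merely \emph{a} compatible structure. A priori, any graph $G$ whose edge set contains all pairs on which $\alpha$ is nonzero is "compatible" with the layer, and since $\alpha$ generically has all entries nonzero, one must rule out that the bias could be something weaker; the clean resolution is the equivariance/minimality characterization above, tying the bias to $\mathrm{Aut}(G)$ and appealing to Definition~\ref{def:InductiveBias}'s requirement that an inductive bias be a \emph{minimal} set of assumptions — the minimal graph assumption consistent with full $S_n$-equivariance and with Equation~\ref{eq:selfAttention} is exactly $E = V \times V$. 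I would make sure to state this minimality step explicitly rather than letting it be implicit in "the sum is over all $j$".
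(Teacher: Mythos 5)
Your first and third paragraphs are essentially the paper's own proof: the paper argues that the aggregation in Equation~\ref{eq:selfAttention} runs over all input entities, so by Lemma~\ref{lem:Masked} every entry satisfies $K_{i,j} = k(x_i,x_j) > -\infty$, hence $\alpha_{i,j} > 0$ and no disconnection occurs, which means the assumed graph is complete. Where you diverge is the second paragraph: the paper does \emph{not} use equivariance to prove this theorem --- it reserves the $S_n$-equivariance of self-attention for a separate result (Theorem~\ref{equiv:selfAttention}), which it states without proof and attributes to \cite{bronstein2021geometric}. Your reindexing computation effectively supplies a proof of that later theorem, and your attempt to characterize the bias via $\mathrm{Aut}(G)$ is a genuine strengthening of the paper's informal "no masking, therefore complete graph" step. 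Two cautions on that extra step: first, $\mathrm{Aut}(G) = S_n$ does not by itself single out $K_n$ (the edgeless graph on $n$ vertices has the same automorphism group), so the equivariance argument only works in combination with the positivity of $\alpha$, which you do invoke; second, the converse claim that a graph layer is equivariant \emph{only} under $\mathrm{Aut}(G)$ is not automatic --- degenerate choices of $\psi$ and $\oplus$ can enlarge the symmetry group --- so it should be stated as holding for generic message functions. Your minimality discussion, tying the claim back to the "minimal set of assumptions" clause of Definition~\ref{def:InductiveBias}, addresses a real gap that the paper leaves implicit; the cleanest resolution is the one you hint at: for the layer to be a graph layer in the sense of Definition~\ref{def:GraphLayer} whose aggregation reproduces Equation~\ref{eq:selfAttention}, one needs $\mathcal{N}_i \supseteq \{1,\dots,n\}$ for every $i$, which forces $E = V \times V$ directly.
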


\begin{proof}
In a self-attention layer (Definition~\ref{def:selfAttention}), the aggregation—typically applied over neighbors in the graph of the node representing the entity $x_i$ in message-passing networks—is instead applied over all input entities.

Based on Lemma~\ref{lem:Masked}, if $K$ is the product matrix, then for all entities $x_i, x_j$ from the input data we have $K_{i,j} = k(x_i,x_j) > -\infty$, so $\alpha_{i,j} > 0$ in the attention matrix. This implies that no disconnections occur in the underlying graph; i.e., a fully connected graph is assumed.

\end{proof}

\begin{figure}[ht]
\centering
\subfigure[Attention Matrix]{\includegraphics[width=0.45\linewidth]{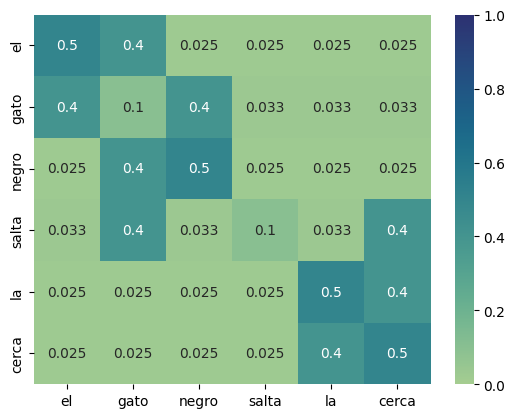}}
\subfigure[Relational Graph]{\includegraphics[width=0.45\linewidth]{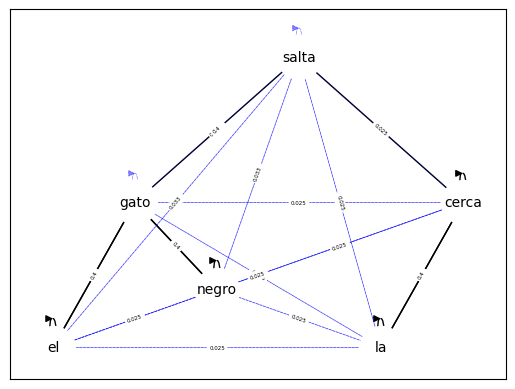}}
\caption{Example of relationships established by a self-attention mechanism}
\label{fig:AttentionGraphRepresentation}
\end{figure}

Therefore, self-attention layers assume a relational inductive bias where all entities in the data are connected to each other, i.e., $E = x \times x$ (see Figure~\ref{fig:AttentionGraphRepresentation}).
As for encoder-decoder attention layers (Definition~\ref{def:encDecAttention}), these are based on aggregating over input entities and are aimed at representing output entities only.

\begin{theo}[Relational bias in encoder-decoder attention] \label{theo:encDecRel}
Encoder-decoder attention layers assume a relational inductive bias based on a bipartite graph.
\end{theo}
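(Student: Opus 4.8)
The plan is to follow the template of the proof of Theorem~\ref{bias:selfAttention}, but carried out over the \emph{joint} entity set rather than a single one. First set $V = X \cup Y$ and take the pair $\{X,Y\}$ as the candidate bipartition; the underlying relational graph $G = (V,E)$ is the object whose edge set we must identify. The key observation is that Definition~\ref{def:encDecAttention} only ever forms the weights $\alpha(y_i,x_j)$ with $y_i \in Y$ and $x_j \in X$: the summation in Equation~\ref{eq:encDecAttention} ranges over $j \in \{1,\dots,m\}$, i.e.\ over the encoder set $X$ alone; the decoder entities $y_i$ are never aggregated against one another; and the encoder entities are not re-represented by this layer at all.

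Next, phrase this in the masking language of Lemma~\ref{lem:Masked}. Embedding the weights into a full kernel matrix $K$ indexed by $V \times V$, every entry indexed by a pair lying entirely inside $X$ or entirely inside $Y$ is masked, $K_{u,w} = -\infty$, hence $\alpha_{u,w} = 0$; only the cross entries $K_{y_i,x_j}$ survive, and for those $K_{y_i,x_j} = k(y_i,x_j) > -\infty$, so $\alpha(y_i,x_j) > 0$. By Lemma~\ref{lem:Masked} the nonzero pattern of $\alpha$ is exactly the edge set $E$; hence every edge of $G$ joins the part $X$ to the part $Y$, with no edge internal to either part. A graph of this form whose vertex set is partitioned into $X$ and $Y$ is by definition bipartite (in fact complete bipartite, since every unmasked weight is strictly positive), which is precisely the claimed relational inductive bias.

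The main obstacle I expect is conceptual rather than computational: the encoder--decoder layer is \emph{asymmetric}, producing new representations only for $Y$ and leaving $X$ untouched, so one has to argue that its relational structure is nonetheless faithfully captured by a single bipartite graph on $V$, and that the absence of $X$--$X$, $Y$--$Y$, and $X \to Y$ weights in Definition~\ref{def:encDecAttention} genuinely corresponds to masking those edges rather than being an artifact of the way the layer is written. Once this bookkeeping is settled, the conclusion is a direct application of Lemma~\ref{lem:Masked}, exactly as in the self-attention case.
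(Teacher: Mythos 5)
Your proposal follows essentially the same route as the paper's own proof: both embed the weights into a product matrix $K$ over the joint entity set, mask every entry not of the form $(y_i, x_j)$ with $y_i \in Y$ and $x_j \in X$ (the paper's ``otherwise $-\infty$'' clause), and read off bipartiteness from the surviving block via Lemma~\ref{lem:Masked}. Your version is somewhat more explicit about the bookkeeping (the complete-bipartite observation and the asymmetry of which entities get re-represented), but the argument is the same.
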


\begin{proof}
In an encoder-decoder attention layer, we have a set of entities: $$\{x_1, x_2,...,x_n, y_1,y_2,...,y_m\}$$ with a partition into sets $X = \{x_1,...,x_n\}$ and $Y = \{y_1,...,y_m\}$ where $X \cap Y = \emptyset$. The product matrix $K$ is defined as follows:
$$K_{i,j} = \begin{cases}
        k(y_i, x_j) & \text{ if } y_i \in Y \land x_j \in X \\
        -\infty & \text{ otherwise}
    \end{cases}$$
That is, $\alpha_{i,j} > 0$ if and only if $x_i \in X$ and $y_j \in Y$, meaning the attention matrix only connects the lower-left block (Figure~\ref{fig:AttentionGraphEncDec}).
\end{proof}

The partition is determined by the input and output entities; typically, this is represented as a directed graph from inputs to outputs. This type of attention can be applied to both transformers and recurrent networks.

\begin{figure}[ht]
\centering
\subfigure[Attention Matrix]{\includegraphics[width=0.45\linewidth]{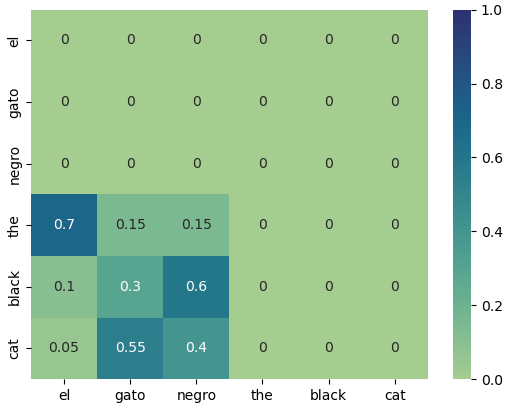}}
\subfigure[Relational Graph]{\includegraphics[width=0.45\linewidth]{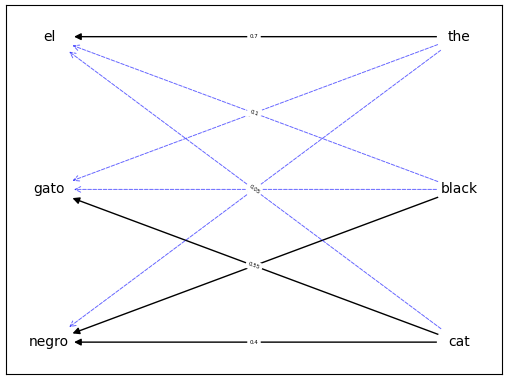}}
\caption{Example of relationships established by an encoder-decoder attention mechanism}
\label{fig:AttentionGraphEncDec}
\end{figure}

The third mechanism to examine is masked attention (Definition~\ref{def:maskedAttention}). These attention type assumes an order among the input entities such that, when viewed as a graph, a connection between two nodes exists if and only if one entity precedes the other in this order.

\begin{theo}[Relational bias in masked attention] \label{bias:maskAttention}
Masked attention layers assume a relational inductive bias based on a total order.
\end{theo}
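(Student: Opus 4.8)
The plan is to apply Lemma~\ref{lem:Masked} directly, exhibiting the product matrix $K$ associated with masked attention and reading off from its support the edge set of the underlying graph $G=(V,E)$. First I would recall from Definition~\ref{def:maskedAttention} that, under the imposed ordering $x_1,\dots,x_n$, the hidden representation $h_i$ aggregates only over indices $j\le i$. By Lemma~\ref{lem:Masked} this sparsity pattern is exactly a masking of the attention matrix, so I would set
$$K_{i,j}=\begin{cases} k(x_i,x_j) & \text{if } j\le i,\\ -\infty & \text{if } j>i,\end{cases}$$
which yields $\alpha_{i,j}>0$ if and only if $j\le i$.

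Next I would identify the relation $R$ on $V=\{x_1,\dots,x_n\}$ defined by $x_j\mathbin{R}x_i \iff \alpha_{i,j}>0 \iff j\le i$, and verify the three order axioms: reflexivity holds because $i\le i$ gives $\alpha_{i,i}>0$; antisymmetry holds because $j\le i$ and $i\le j$ force $i=j$; transitivity holds because $\le$ on $\{1,\dots,n\}$ is transitive. Totality (comparability of any two distinct entities) follows since for $i\ne j$ either $j\le i$ or $i\le j$. Hence the graph induced by the mask is precisely the comparability graph of the total order $\le$ on the indices — equivalently, the transitive tournament on $n$ vertices, which as an undirected graph is the complete graph $K_n$, while as a directed graph it is the linear order.

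The main subtlety — not so much an obstacle as a point requiring care — is disentangling what "relational bias based on a total order" means: the undirected relational graph of a masked layer is complete (every pair of entities is related), so the distinguishing feature versus self-attention (Theorem~\ref{bias:selfAttention}) is the \emph{directedness} induced by the mask, i.e. the antisymmetry of the attention support. I would therefore state explicitly that the bias is that the entities carry a total order and that information flows monotonically along it, and note that this recovers masked attention as the special case $G=(V,E)$ with $E=\{(i,j):j\le i\}$, paralleling how strided and graph attention arise from further restricting $E$. A figure analogous to Figures~\ref{fig:AttentionGraphRepresentation} and \ref{fig:AttentionGraphEncDec}, showing the lower-triangular attention matrix alongside the path-like directed graph, would complete the argument.
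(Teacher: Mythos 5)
Your proposal follows essentially the same route as the paper's own proof: invoke Lemma~\ref{lem:Masked} to exhibit the lower-triangular product matrix $K$, read off the relation $j \le i$ from the support of $\alpha$, and verify the order axioms. If anything, yours is slightly more complete, since you explicitly check comparability of distinct elements (which the paper omits, jumping from the partial-order axioms to ``total'') and you correctly flag that the distinguishing feature relative to self-attention is the directedness/antisymmetry of the attention support rather than the underlying undirected connectivity.
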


\begin{proof}
Let $x = \{x_1, x_2,...,x_n\}$ be the set of input entities, and define an order $O(x) = \{ (x_i,x_{i+1}) : i =1,...,n-1 \}$ over the entities. Considering the product matrix $K$ in the attention mechanism, masking is determined by:
$$K_{i,j} = \begin{cases}
        k(x_i, x_j) & \text{ if } j \leq i \\
        -\infty & \text{ if } j > i
    \end{cases}$$
Thus, $\alpha_{i,j} \neq 0$ if and only if $x_j$ precedes $x_i$ in $O(x)$, while other entries represent disconnections. Clearly, the nonzero entries appear in the lower triangular part of the attention matrix (Figure~\ref{fig:maskSparseAtt}).

Since the following hold: (i) for all $i$, $i \leq i$ by identity; (ii) if $j \leq i$ and $i \leq j$, then $i = j$ (otherwise the upper triangle would also have connections); and (iii) if $k \leq j$ and $j \leq i$, then $k \leq i$, we conclude that the order is total.    

\end{proof}

The types of relationships in masked attention layers introduce a relational inductive bias that assumes entities do not connect to subsequent elements. This is, as is well known, useful for sequence modeling.

\begin{figure}[ht]
\centering
\subfigure[Attention Matrix]{\includegraphics[width=0.45\linewidth]{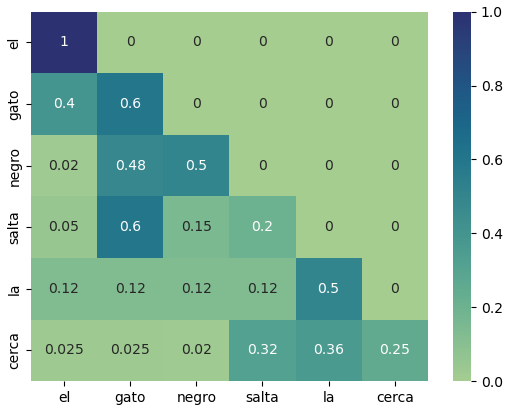}}
\subfigure[Relational Graph]{\includegraphics[width=0.45\linewidth]{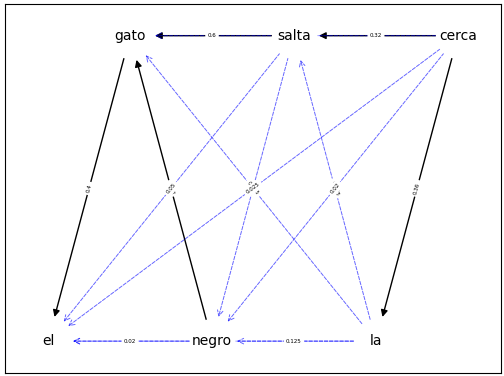}}
\caption{Example of relationships established by a masked attention mechanism}
\label{fig:maskSparseAtt}
\end{figure}

Both masked attention and stride sparse attention (Definition~\ref{def:strideAttention}) can be viewed as mechanisms that relate each entity only to previous ones. This defines a directed graph whose adjacency matrix has nonzero values only in the lower triangle. In stride attention, zeros also appear in the lower triangle but are limited to a fixed number of previous elements (see Figure~\ref{fig:stepAttention}).

\begin{theo}[Relational bias in stride attention] \label{bias:strideAttention}
Stride attention layers assume an inductive bias where an element connects to the $p$ previous elements for a fixed $p$ and a given order.
\end{theo}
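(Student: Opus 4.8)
The plan is to follow the template of the preceding proofs (Theorems~\ref{bias:selfAttention}--\ref{bias:maskAttention}): use Lemma~\ref{lem:Masked} to translate the summation constraint in Definition~\ref{def:strideAttention} into a masking pattern on the product matrix $K$, and then read off the induced graph. First I would fix the input entities $x = \{x_1,\dots,x_n\}$ together with the order $O(x) = \{(x_i,x_{i+1}) : i = 1,\dots,n-1\}$ already introduced in Theorem~\ref{bias:maskAttention}, and recall that in strided attention $h_i = \sum_{t \le j \le i} \alpha(x_i,x_j)\psi_v(x_j)$ with $t = \max\{0,i-p\}$, where $p$ is the constant $k$ of Definition~\ref{def:strideAttention}.

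Then I would exhibit the mask explicitly:
$$K_{i,j} = \begin{cases} k(x_i,x_j) & \text{if } \max\{0,i-p\} \le j \le i, \\ -\infty & \text{otherwise,} \end{cases}$$
so that after the softmax $\alpha_{i,j} \neq 0$ if and only if $j$ lies in the window $[\,i-p,\,i\,]$, i.e.\ if and only if $x_j$ equals $x_i$ or is one of the (at most) $p$ entities preceding $x_i$ in $O(x)$. Geometrically the nonzero entries form a banded lower-triangular block of the attention matrix (see Figure~\ref{fig:stepAttention}), and the corresponding relational structure is the directed graph on $\{x_1,\dots,x_n\}$ in which each node receives an edge from its $p$ nearest predecessors in the order. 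I would then observe that this graph is a spanning subgraph of the total-order graph of masked attention: strided attention keeps the ``no connection to future elements'' constraint but additionally bounds the memory horizon to $p$ steps, so the induced relation, unlike in Theorem~\ref{bias:maskAttention}, is no longer transitive and hence is not itself an order.

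The main obstacle is merely one of precision about boundary effects and about the diagonal. For $i \le p$ the window $[\max\{0,i-p\},i]$ contains fewer than $p$ proper predecessors, so the statement ``connects to the $p$ previous elements'' must be read as ``to the at most $p$ previous elements,'' and I would state explicitly that $j = i$ is included (this follows from the bound $j \le i$ in the summation). A secondary remark worth making is that Child et al.\ \cite{child2019generating} also allow a second family of heads that attend to every $p$-th earlier position; the definition adopted here is the fixed contiguous-window variant, so I would note that the proof covers precisely that variant, the strided-head pattern yielding by the same argument a different sparse lower-triangular mask.
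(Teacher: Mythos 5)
Your proposal follows essentially the same route as the paper's proof: it invokes Lemma~\ref{lem:Masked} to write the product matrix $K$ with entries $k(x_i,x_j)$ on the window $\max\{0,i-p\} \le j \le i$ and $-\infty$ elsewhere, and then reads off the resulting banded lower-triangular mask as a directed acyclic graph connecting each entity to its at most $p$ predecessors. You are in fact somewhat more careful than the paper, whose second case is written with the vacuous-looking condition $t > j > i$ rather than ``otherwise'' and which does not address the boundary cases $i \le p$, the inclusion of the diagonal, or the loss of transitivity; these additions are correct but do not change the argument.
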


\begin{figure}[ht]
\centering
\subfigure[Attention Matrix]{\includegraphics[width=0.45\linewidth]{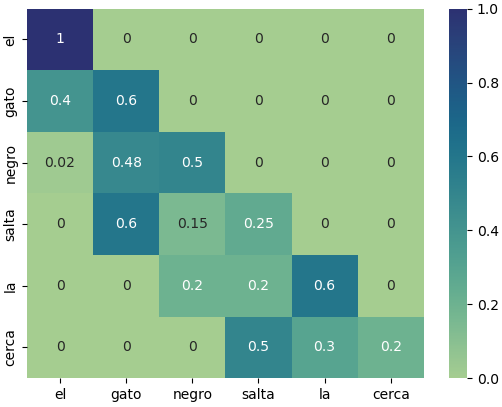}}
\subfigure[Relational Graph]{\includegraphics[width=0.45\linewidth]{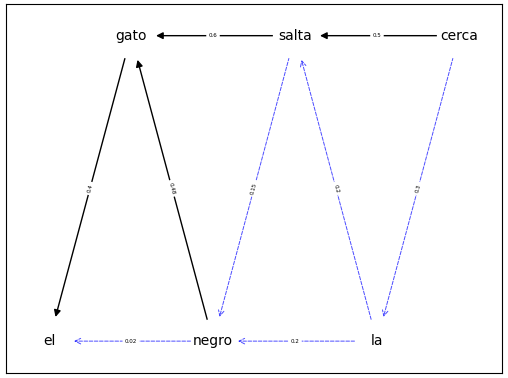}}
\caption{Example of relationships established by a strided attention mechanism}
\label{fig:stepAttention}
\end{figure}

\begin{proof}
Stride attention is similar to masked attention and also requires an order $O(x)$ over the entities. Under this order, we define the entries of $K$ as:
$$K_{i,j} = \begin{cases}
        k(x_i, x_j) & \text{ if } t \leq j \leq i \\
        -\infty & \text{ if } t > j > i
    \end{cases}$$
Here, $t = \max\{0, i-p\}$ for some $p \geq 0$. Relationships among entities occur only with the $p$ previous elements, defining a directed acyclic graph (DAG) that determines the relational inductive bias in these mechanisms.
\end{proof}

\subsection{Equivariance Sensitivity}

To analyze the types of equivariances that attention layers are sensitive to, we rely on permutation subgroups \cite{cayley1854vii}.
For self-attention layers, we have the following result already presented in \cite{bronstein2021geometric}.

\begin{theo}[Equivariance in self-attention] \label{equiv:selfAttention}
Self-attention layers are permutation equivariant.
\end{theo}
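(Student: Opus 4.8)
The plan is to show directly that the self-attention map $f:(x_1,\dots,x_n)\mapsto(h_1,\dots,h_n)$ commutes with the action of the symmetric group $S_n$ that permutes the entities. First I would fix the representation: for a permutation $\sigma\in S_n$ acting on the stacked input $X=(x_1,\dots,x_n)$, the matrix $\rho(\sigma)$ is the corresponding permutation matrix, so that $(\rho(\sigma)X)_i = x_{\sigma^{-1}(i)}$. The claim to establish is $f(\rho(\sigma)X) = \rho(\sigma)f(X)$, i.e. that the $i$-th output of the permuted input equals the $\sigma^{-1}(i)$-th output of the original input.

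The key computation proceeds in two steps. First I would observe that the attention weights transform covariantly: since $\alpha(x_i,x_j)$ depends only on the pair of feature vectors through $\psi_q,\psi_k$ and the softmax over $j$, feeding the permuted input gives $\alpha\big((\rho(\sigma)X)_i,(\rho(\sigma)X)_j\big) = \alpha(x_{\sigma^{-1}(i)}, x_{\sigma^{-1}(j)})$, because the softmax normalization runs over all $j$ and re-indexing $j\mapsto\sigma^{-1}(j)$ is just a bijective relabeling of the summation set. Second, I would substitute this into Equation~\ref{eq:selfAttention}: the $i$-th output for the permuted input is $\sum_j \alpha(x_{\sigma^{-1}(i)},x_{\sigma^{-1}(j)})\,\psi_v(x_{\sigma^{-1}(j)})$, and changing the summation variable to $j' = \sigma^{-1}(j)$ turns this into $\sum_{j'} \alpha(x_{\sigma^{-1}(i)}, x_{j'})\,\psi_v(x_{j'}) = h_{\sigma^{-1}(i)}$. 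That is exactly the $i$-th component of $\rho(\sigma)f(X)$, which closes the argument. An alternative, shorter route is to appeal to Definition~\ref{def:GraphLayer} and the fact (already noted in the paper) that attention is a message-passing layer with a commutative aggregator $\bigoplus$; permutation equivariance of such layers on a fully connected graph (Theorem~\ref{bias:selfAttention}) is then a known structural fact, but I would still want the explicit index-chasing for self-containment.

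The only subtlety — and the step I would treat most carefully — is the interaction between the softmax normalization and the permutation: one must be sure that the denominator in Equation~\ref{eq:attentionWeights} is a sum over the full index set $\{1,\dots,n\}$, so that relabeling leaves it invariant; if instead the normalization were over a masked or structured neighborhood the equivariance would only hold for the stabilizer subgroup of that structure. Since Definition~\ref{def:selfAttention} sums over all $j$ with no mask, this is not an obstacle here, but flagging it sets up the contrast with the masked/strided cases analyzed in the subsequent theorems, where the equivariance group degrades to a proper subgroup of $S_n$. I would also note explicitly that the projections $\psi_q,\psi_k,\psi_v$ act pointwise on each entity and therefore trivially commute with reindexing, which is what makes the whole argument go through.
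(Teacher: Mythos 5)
Your proof is correct, but note that the paper itself does not actually prove this theorem: it simply states the result and defers to Bronstein et al.\ \cite{bronstein2021geometric} (``we have the following result already presented in [9]''). Your explicit index-chasing argument therefore supplies a derivation the paper omits, and it is the standard, correct one: the attention weights satisfy $\alpha\big((\rho(\sigma)X)_i,(\rho(\sigma)X)_j\big)=\alpha(x_{\sigma^{-1}(i)},x_{\sigma^{-1}(j)})$ because the softmax denominator is a sum over the full index set and is invariant under the bijective relabeling $j\mapsto\sigma^{-1}(j)$, and the change of summation variable in Equation~\ref{eq:selfAttention} then yields $f(\rho(\sigma)X)_i=h_{\sigma^{-1}(i)}=(\rho(\sigma)f(X))_i$. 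The two points you single out are exactly the right ones to make explicit: the pointwise action of $\psi_q,\psi_k,\psi_v$ (which commutes trivially with reindexing), and the fact that the softmax normalization ranges over all of $\{1,\dots,n\}$ --- the latter is precisely what breaks for masked and strided attention, where the paper's later theorems correctly restrict the symmetry group to translations. Your alternative route via Definition~\ref{def:GraphLayer} (commutative aggregator on a fully connected graph) is also consistent with how the paper frames attention as a graph layer, so either argument would integrate cleanly with the surrounding text.
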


In masked and stride attention, there is a dependence on a predefined order.

\begin{theo}[Equivariance in masked attention]
Masked attention layers are equivariant to translations.
\end{theo}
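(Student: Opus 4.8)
\textbf{Proof plan.} The plan is to rewrite the masked attention layer in matrix form and show that the lower-triangular mask is precisely the object that fails to commute with a general permutation but does commute with translations. Stack the inputs as the rows of a matrix $X$ and let $A(X)$ be the full attention matrix with entries $A(X)_{i,j}=\alpha(x_i,x_j)=Softmax_K\big(k(x_i,x_j)\big)$, let $V(X)$ have rows $\psi_v(x_j)$, and let $M$ be the fixed $0/1$ lower-triangular matrix implementing the constraint $j\le i$; by Lemma~\ref{lem:Masked} this is the same as the $-\infty$ pattern imposed on $K$. Then Definition~\ref{def:maskedAttention} reads $H=\big(M\odot A(X)\big)V(X)$ with $\odot$ the entrywise product. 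A translation is represented, exactly as in the convolutional example following Definition~\ref{def:equivariantFunction}, by a shift matrix $T=\rho(\tau)$ acting on the index set by $i\mapsto i+k$; the relevant subgroup of $S_n$ is the group of such order-preserving index shifts (acting on a left-padded, or bi-infinite, sequence so that the shift is a genuine group action rather than a cyclic wrap-around).

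First I would record that the attention weights and the value projection are themselves equivariant under any permutation $T$: since $A(X)_{i,j}$ depends only on rows $i$ and $j$ of $X$, permuting the input conjugates the matrix, $A(TX)=T\,A(X)\,T^{-1}$, which is the restriction of Theorem~\ref{equiv:selfAttention} to the translation subgroup; and applying $\psi_v$ rowwise gives $V(TX)=T\,V(X)$. The crux is then to show $TMT^{-1}=M$. This is where the total-order characterisation of Theorem~\ref{bias:maskAttention} is used: the predicate "$j\le i$" defining the support of $M$ is invariant under the simultaneous shift $i\mapsto i+k$, $j\mapsto j+k$, because $j\le i \iff j+k\le i+k$; hence $(TMT^{-1})_{i,j}=M_{i-k,\,j-k}=M_{i,j}$, i.e. translating $K$ leaves its $-\infty$ pattern fixed. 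Combining these, and using that $\odot$ commutes with simultaneous row/column permutation,
$$H(TX)=\big(M\odot A(TX)\big)V(TX)=\big(TMT^{-1}\odot T A(X)T^{-1}\big)\,T V(X)=T\big(M\odot A(X)\big)T^{-1}T\,V(X)=T\,H(X),$$
which is the equivariance relation of Definition~\ref{def:equivariantFunction} for the translation group.

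The main obstacle is conceptual rather than computational: making precise what "translation" means on a finite ordered index set, since a cyclic shift would move the last index to the front and spuriously populate the upper triangle of $TMT^{-1}$, breaking the step $TMT^{-1}=M$. I would resolve this exactly as the discrete-convolution picture does, by taking the symmetry group to be the order-preserving index shifts (equivalently, letting $\mathbb{Z}$ act on a suitably padded or infinite sequence), under which the mask relation $j\le i$ is stable by construction. Once that is fixed the computation above is immediate, and the same analysis simultaneously explains why masked attention is \emph{not} permutation-equivariant, only translation-equivariant: a generic permutation does not preserve "$j\le i$", whereas every translation does.
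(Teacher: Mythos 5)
Your proof is correct and rests on the same key observation as the paper's: a translation $\sigma(i)=i+m$ preserves the predicate $j\le i$, so the lower-triangular masking pattern of $K$ is unchanged, which is all the paper's proof verifies. Your version goes further than the paper by actually completing the equivariance computation $H(TX)=TH(X)$ via the decomposition $H=\big(M\odot A(X)\big)V(X)$ and the conjugation identities $A(TX)=TA(X)T^{-1}$, $TMT^{-1}=M$, and by flagging the finite-index-set subtlety (cyclic shifts would wrap the mask into the upper triangle) that the paper leaves implicit; these are genuine improvements in rigor, but not a different route.
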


\begin{proof}
A translation is a function $\sigma(i) = i + m$ for some constant $m$.
As shown earlier, masked attention layers assume an order $O(x)$ over the input entities to create the attention matrix $\alpha$ through the product matrix $K$. Under translation:
$$K_{\sigma(i),\sigma(j)} = \begin{cases}
        k(x_{\sigma(i)}, x_{\sigma(j)}) & \text{ if } \sigma(j) \leq \sigma(i) \\
        -\infty & \text{ if } \sigma(j) > \sigma(i)
    \end{cases}$$
This preserves the order (if $j \leq i$, then $\sigma(j) \leq \sigma(i)$).
\end{proof}

\begin{theo}[Equivariance in stride attention] \label{equvi:strideAttention}
Stride attention layers are equivariant to translations.
\end{theo}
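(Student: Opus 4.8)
The plan is to mirror, almost verbatim, the proof of the preceding theorem on equivariance in masked attention, since stride attention is defined exactly like masked attention but with the extra lower cutoff $t = \max\{0, i-p\}$. First I would recall from the proof of Theorem~\ref{bias:strideAttention} that a stride attention layer assumes an order $O(x)$ over the input entities and builds its attention matrix $\alpha$ through the product matrix $K$ whose nonzero block is exactly $\{(i,j) : t \leq j \leq i\}$ with $t = \max\{0,i-p\}$. Then I would take a translation to be the map $\sigma(i) = i + m$ for a fixed constant $m$, exactly as in the masked case, and write out the transformed product matrix
$$K_{\sigma(i),\sigma(j)} = \begin{cases} k(x_{\sigma(i)}, x_{\sigma(j)}) & \text{ if } \sigma(t) \leq \sigma(j) \leq \sigma(i) \\ -\infty & \text{ otherwise} \end{cases}$$
and observe that a translation by $m$ shifts $i$, $j$, and the window endpoint $t$ all by the same amount, so the band condition $t \leq j \leq i$ is equivalent to $t + m \leq j + m \leq i + m$, i.e. the relational structure (the DAG connecting each node to its $p$ predecessors) is carried to the correspondingly translated structure. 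Hence $f(\rho(\sigma)x) = \rho(\sigma)f(x)$ in the sense of Definition~\ref{def:equivariantFunction}, which is the claim.

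The key steps in order are: (1) restate the stride mask and the definition of a translation $\sigma(i)=i+m$; (2) check that $\sigma$ preserves the order relation $j \leq i \iff \sigma(j) \leq \sigma(i)$ — this is immediate since $m$ cancels; (3) check that $\sigma$ also preserves the lower cutoff, i.e. that $t' := \max\{0, \sigma(i) - p\}$ matches $\sigma(t)$ up to the boundary truncation at $0$; (4) conclude that the masked entries of $K$, and therefore the zero entries of $\alpha$, are permuted consistently with $\sigma$, giving equivariance. I would also note, as the paper does in the text preceding Theorem~\ref{equvi:strideAttention}, that this is precisely the statement that stride attention's relational bias (connect to the $p$ previous elements) is translation-invariant, which is why the layer is translation-equivariant rather than fully permutation-equivariant.

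The one point that deserves a word of care — and is the main (small) obstacle — is the truncation in $t = \max\{0, i-p\}$: near the beginning of the sequence the window is clipped at index $0$, so strictly speaking $\sigma$ is only an order-preserving bijection on a sufficiently interior block of indices, and a genuine translation by $m>0$ would push some indices out of $\{1,\dots,n\}$ or shift the clipped region. The honest way to handle this is to say, as is standard for these discrete "translation equivariance" statements, that we regard the index set as $\mathbb{Z}$ (or a cyclic group, or simply restrict to the regime where no boundary effects occur), so that $\max\{0,i-p\}$ is replaced by $i-p$ and the window is a genuine translate of itself; then steps (2)–(3) go through with no caveat. Given that the paper's companion proof for masked attention already glosses this boundary issue, I would treat it the same way: state the translation action on the (infinite or cyclic) index set, verify the band condition is translation-stable, and declare the layer translation-equivariant.
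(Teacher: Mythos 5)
Your proof is correct and follows essentially the same route as the paper: both arguments reduce to the masked-attention case by observing that a translation $\sigma(i)=i+m$ shifts all indices uniformly and therefore preserves the relational structure (the paper phrases this as the translation preserving the total topological order of the DAG, while you verify directly that the band condition $t \leq j \leq i$ is translation-stable). Your explicit treatment of the boundary truncation in $t=\max\{0,i-p\}$ is more careful than the paper, which glosses over this point entirely.
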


\begin{proof}
Theorem~\ref{bias:strideAttention} introduced an order over the entities, which does not define a total order. For a fixed $p$, if $j < i - p$, then entity $j$ does not relate to $i$, although it may relate to entities that are related to $i$, so transitivity doesn't hold. However, stride attention defines a DAG in which a unique topological order can be defined, making it a total order. Thus, as in masked attention, a translation preserves this total order and does not alter relationships.
\end{proof}

\begin{col}
The relational inductive biases of stride attention layers subsume those of masked attention.
\end{col}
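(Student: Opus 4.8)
The plan is to exhibit masked attention as the maximal-window special case of stride attention, so that the family of relational structures realizable by stride layers contains the one realized by masked layers. The governing observation is that the stride parameter $p$ in Definition~\ref{def:strideAttention} controls the size of the backward window, and that once this window is at least as large as the input it covers the entire prefix, collapsing the stride constraint onto the masked one.

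First I would recall the two product matrices established in the proofs of Theorem~\ref{bias:maskAttention} and Theorem~\ref{bias:strideAttention}: masked attention sets $K_{i,j} = k(x_i,x_j)$ exactly when $j \le i$, while stride attention sets $K_{i,j} = k(x_i,x_j)$ exactly when $\max\{0,i-p\} \le j \le i$. Next, for an input of $n$ ordered entities I would take $p \ge n-1$ (equivalently, adopt the convention $p = \infty$); then $\max\{0,i-p\} = 0$ for every $i \in \{1,\dots,n\}$, so the stride condition reduces to $0 \le j \le i$, i.e. to $j \le i$. Hence the two product matrices, and therefore the two attention matrices $\alpha$, coincide, and the DAG defined by the stride layer becomes precisely the total order that a masked layer assumes (Theorem~\ref{bias:maskAttention}).

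From this I would conclude the subsumption: every relational structure $(x,G)$ that a masked attention layer can assume is also assumed by some stride attention layer, namely one with a sufficiently large window, so the bias family of stride attention contains that of masked attention. I would also note that the inclusion is proper, since for finite $p < n-1$ the stride relation is not transitive (as already remarked in the proof of Theorem~\ref{equvi:strideAttention}), hence cannot produce a total order. Finally I would observe that this containment is consistent with the equivariance picture: both layers are equivariant to translations (the equivariance theorems for masked and stride attention), and the common translation group is exactly what a total-order bias and its windowed refinements share.

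The main obstacle I anticipate is purely bookkeeping about the status of $p$: Definition~\ref{def:strideAttention} fixes $p$ as a constant independent of the input, so to make the identification ``masked $=$ stride'' literally true one must either state the subsumption per fixed sequence length $n$ (taking $p \ge n-1$) or fix the convention $p = \infty$. Making this precise, and being explicit that it is the whole parameterized family of stride layers rather than any single one that subsumes masked attention, is the only delicate point, and it should be dispatched with a single sentence fixing the convention before the computation above.
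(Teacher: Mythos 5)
Your proposal is correct and follows essentially the same route as the paper, which justifies the corollary with a single remark that stride attention recovers masked attention by choosing the window parameter large enough to cover all previous elements. Your version is in fact more careful than the paper's (whose formula $t = \max\{n : n = |x|\}$ is garbled), since you explicitly set $p \ge n-1$ so that $\max\{0,i-p\}=0$, verify the two product matrices coincide, and flag that the subsumption holds for the parameterized family of stride layers rather than any single fixed $p$.
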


Stride attention generalizes masked attention by choosing $t = \max\{n : n = |x|\}$, i.e., considering all previous elements.

\begin{theo}[Equivariance in encoder-decoder attention]
Encoder-decoder attention is block permutation equivariant.
\end{theo}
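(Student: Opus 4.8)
The plan is to mirror the structure of the proofs for masked and stride attention (Theorems~\ref{bias:maskAttention}--\ref{equvi:strideAttention}): identify the subgroup of $S_{n+m}$ that preserves the masking pattern established for this mechanism in Theorem~\ref{theo:encDecRel}, and then check that the remaining ingredients of the layer --- the pointwise projections $\psi_q,\psi_k,\psi_v$, the softmax, and the summation --- are equivariant with respect to it. Here the relevant subgroup is the block-permutation subgroup $S_n \times S_m \leq S_{n+m}$ acting by permuting the encoder entities $\{x_1,\dots,x_n\}$ among themselves via $\pi \in S_n$ and the decoder entities $\{y_1,\dots,y_m\}$ among themselves via $\tau \in S_m$, with representation $\rho(\pi,\tau)$ the corresponding block-diagonal permutation matrix.

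First I would observe that, because $\rho(\pi,\tau)$ maps $X$ to $X$ and $Y$ to $Y$, the bipartite masking of the product matrix $K$ from Theorem~\ref{theo:encDecRel} is invariant: an entry equal to $-\infty$ stays $-\infty$, while the entry $k(y_i,x_j)$ is merely relocated to position $(\tau(i),\pi(j))$. This is the step that forces the restriction to $S_n\times S_m$ rather than all of $S_{n+m}$: an arbitrary permutation mixing the two blocks would destroy the bipartite structure and hence, by Lemma~\ref{lem:Masked}, would change the assumed relational graph, so no larger subgroup can work.

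Next I would track an entity through the layer. Since $\psi_q,\psi_k,\psi_v$ are applied entrywise, under the action the kernel value at the new position $(i,j)$ equals $k(y_{\tau(i)},x_{\pi(j)})$, i.e.\ the old $(\tau(i),\pi(j))$ entry of $K$. The softmax in Equation~\ref{eq:attentionWeights} is normalized over the encoder index $j$, so permuting that index by $\pi$ permutes the softmax outputs identically; thus $\alpha$ at position $(i,j)$ becomes $\alpha(y_{\tau(i)},x_{\pi(j)})$. Substituting into Equation~\ref{eq:encDecAttention} and reindexing the sum by $j'=\pi(j)$ (legitimate because the sum ranges over all of $X$ and $\pi$ is a bijection on it) yields $h_i = \sum_{j'}\alpha(y_{\tau(i)},x_{j'})\psi_v(x_{j'})$, which is exactly the original $h_{\tau(i)}$. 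Hence the output sequence is permuted by $\tau$ alone --- equivariant in the decoder block and invariant in the encoder block --- which is precisely the claim that encoder-decoder attention is block-permutation equivariant.

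The main obstacle is not the algebra but stating the result cleanly: the input representation $\rho(\pi,\tau)$ acts on the concatenated entity set, whereas the output representation acts only on the $m$ decoder positions (the encoder entities are not outputs of this layer), so this ``equivariance'' is with respect to a group that acts differently on domain and codomain, and degenerates to invariance on the $X$-component. Making that distinction precise --- and noting why it does not fit Definition~\ref{def:equivariantFunction} verbatim, which assumes $f:X\to X$ --- is the delicate point; pushing the permutation through the softmax and the sum is routine and parallels the earlier equivariance proofs.
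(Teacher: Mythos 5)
Your proposal is correct and rests on the same core observation as the paper's proof: the relevant symmetries are the permutations that respect the bipartition $X \cup Y$, and these preserve the bipartite masking pattern of $K$ established in Theorem~\ref{theo:encDecRel}. The difference is one of completeness. The paper's proof stops there: it defines a block permutation as one compatible with the equivalence relation underlying the partition, checks that such permutations keep $X$ in $X$ and $Y$ in $Y$, and concludes. It never actually verifies the equivariance identity of Definition~\ref{def:equivariantFunction} --- it shows the relational graph is preserved, which is necessary but not by itself the statement that $f(\rho(g)x) = \rho(g)f(x)$. Your proof supplies the missing half: you identify the group concretely as $S_n \times S_m \le S_{n+m}$ with its block-diagonal representation, push the action through the pointwise projections, observe that the softmax normalization over the encoder index commutes with reindexing by $\pi$, and reindex the sum in Equation~\ref{eq:encDecAttention} to obtain $h_i \mapsto h_{\tau(i)}$. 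You also flag something the paper silently elides: the layer maps $n+m$ entities to only $m$ outputs, so the domain and codomain carry different representations of the group (the action degenerates to invariance on the encoder block), and the statement does not fit Definition~\ref{def:equivariantFunction} verbatim. That observation, together with your argument for why no subgroup larger than $S_n \times S_m$ can work, makes your version both more precise and more honest about what ``block permutation equivariant'' means here; the paper's version buys brevity at the cost of leaving the actual functional verification implicit.
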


\begin{proof}
A block permutation is an epimorphism $\sigma: X \to X$ such that, given an equivalence relation $\sim$, if $x \sim y$ in $X$, then $\sigma(x) \sim \sigma(y)$.
As noted in Theorem~\ref{theo:encDecRel}, encoder-decoder attention assumes a partition $X \cup Y$ of the data. Therefore, there exists an equivalence relation $\sim$ underlying this partition.
Clearly, if $x_i,x_j \in X$ then $x_i \sim x_j$, so $\sigma(x_i) \sim \sigma(x_j)$ and thus $\sigma(x_i), \sigma(x_j) \in X$. Similarly, permutations over $Y$ remain in $Y$. The sets $X$ and $Y$ remain disjoint under such permutations, preserving the partition.
\end{proof}

\subsection{Discussion}

We propose a hierarchy of common attention mechanisms based on relational biases. Other works have characterized transformers by architectural types \cite{liu2023survey}, \cite{lu2022transformers} without delving into attention-specific features. The work by Tsai et al. \cite{tsai-etal-2019-transformer} proposes a methodology for choosing attention mechanisms but does not address how different types of masking relate or what biases they introduce.

Table~\ref{tab:ClasificacionMecanismoAtt} shows the classification of attention mechanisms from the most general to the most specific, based on relational bias and equivariances.
Graph attention is more general than sparse attention.
Self-attention connects all entities to each other, while masked attention allows connections only to previous elements, and stride attention limits this to a fixed number of previous elements. Finally, encoder-decoder attention requires a bipartition.\footnote{An implementation of different attention layers and transformers can be found at: https://victormijangosdelacruz.github.io/MecanismosAtencion/.}

\begin{table}[!h]
\small
\centering
\begin{tabular}{c | p{2.7cm} p{3cm} p{3cm}} \hline
\textbf{Attention Mechanism} & \textbf{Relations} & \textbf{Symmetry Group} & \textbf{Data Type} \\ \hline
Graph Attention & Depends on the data & Dynamic & With graph structure for each instance \\
Sparse Attention & Arbitrary & Arbitrary & With specific graph relationships \\
Self-Attention & Fully connected & Permutations & Relational or bidirectional \\
Masked & Previous elements & Translation & Sequential \\
Strided & $p$ previous elements & Translation & Bounded sequential \\
Encoder-Decoder & Bipartition & Block system & Partitional \\ \hline
\end{tabular}
\caption{Classification of common attention mechanisms}
\label{tab:ClasificacionMecanismoAtt}
\end{table}

The previous table also includes the type of data that these models are suited for.
Graph attention is useful for datasets where each instance has an independent graph structure; for example, it has been applied to node classification and 3D surfaces described by triangulations, such as in the GAT model \cite{velickovic2018graphattentionnetworks}.
Self-attention networks are often used in auto-encode language models like BERT \cite{devlin2019bert}, where bidirectional relationships are assumed.
Masked attention is used in autoregressive models like GPT \cite{dai-etal-2019-transformer}, assuming future elements are unknown, allowing only backward relations.
Sparse attention is used in models like Unlimiformer \cite{bertsch2023unlimiformerlongrangetransformersunlimited}, which processes text blocks (limiting connections by proximity) and is applied in text classification and analysis.
Models like ViT \cite{dosovitskiy2021imageworth16x16words}\cite{Li_etal_ViT_explanation_attempt} or S4 \cite{gu2022efficientlymodelinglongsequences} use stride attention for processing images as token sequences, transforming image sections or patches.
Finally, T5 \cite{raffel2020exploring} incorporates encoder-decoder attention between input and output partitions, and has been applied to translation and summarization. Table~\ref{tab:models} summarizes these architectures and their applications.

\begin{table}[h]
\small
\centering
\begin{tabular}{p{2cm}|p{2.5cm} p{7.5cm}}
\hline
\textbf{Model} & \textbf{Type of Attention} & \textbf{Applications/Effects} \\
\hline
GAT \cite{velickovic2018graphattentionnetworks} & Graph-based & Node classification, clustering, recommendation systems, 3D models. \\
BERT \cite{devlin2019bert} & Self-attention & Sentiment analysis, text classification, question answering. \\
GPT \cite{Radford2018ImprovingLU} & Masked & Text generation, machine translation, automatic summarization. Enables generative pretraining of a large language model across multiple tasks in an unsupervised manner. \\
Unlimiformer \cite{bertsch2023unlimiformerlongrangetransformersunlimited} & Sparse attention & Text analysis, text classification, natural language processing. Enables learning of long-range sequences with limited computational resources. \\
ViT \cite{dosovitskiy2021imageworth16x16words} & Strided & Image classification, object detection, image segmentation. Captures spatial attention patterns that are both agnostic (e.g., in early layers) and content-sensitive (e.g., in deeper layers). \\
T5 \cite{raffel2020exploring} & Encoder-decoder & Machine translation, automatic summarization, text generation. Enables multitask training of the model in an unsupervised fashion. \\
\hline
\end{tabular}
\label{tab:models}
\caption{Attention-based models for language and vision tasks}
\end{table}

One of the main limitations of transformers is that their operation relies on attention mechanisms that assume fully connected graphs, which requires large amounts of data to adequately learn meaningful relationships. Introducing constraints on the possible relationships can embed a relational bias that may improve the performance of architectures based on these layers, provided that such inductive biases align with the structure of the data.

\section{Conclusions and Future Work}

In this work, we presented a theoretical approach to relational inductive biases within attention mechanisms. To this end, we relied on the theory of geometric deep learning \cite{bronstein2021geometric}, which allowed us to study the types of permutation subgroups under which these mechanisms are equivariant. We thus developed a classification based on the assumptions each mechanism makes about the underlying relationships among the entities in the data domain. Our analysis provides a deeper understanding of how attention mechanisms operate and how they can process complex phenomena, such as syntactic variations in natural languages.

We also extended the characterization of masking and relationships in attention mechanisms by Tsai et al. \cite{tsai-etal-2019-transformer}. However, in attention mechanisms, relationships occur not only between pairs of entities (a common graph structure) but may also involve higher-order relationships (triplets, quadruplets, etc.). The nonlinearities in these mechanisms, such as those evident from the perspective of generalized kernels \cite{tsai-etal-2019-transformer}, can be a source of such higher-order relationships. For example, in the recent work by \cite{Rao_etal_2022}, it was shown that visual transformers consider high-order spatial interactions within each basic block of their layers. In \cite{Hao_etal_2021}, the interactions between text tokens in a transformer model are interpreted by constructing hierarchical relationships, showing that such interactions do not occur merely in pairs but at higher orders.

As future work, a key challenge is to thoroughly investigate the higher-order relationships that can arise in these mechanisms to achieve a complete characterization of them. Another important aspect is to explore the connections with other types of layers, such as convolutional ones, which can be seen as a subset of attention layers \cite{bronstein2021geometric}. This may provide a more comprehensive understanding of attention mechanisms and their role in transformer architectures.

\section*{Acknowledgments}

We would like to thank the reviewers for their comments, which will help improve the quality of this work. We also wish to thank the PAPIIT projects TA100924 and TA100725 from UNAM.

\bibliographystyle{splncs04}
\bibliography{main}

\newpage
\appendix
\section{Attention-Based Architectures} \label{app:appendix}

\subsection{BERT and Self-Attention}

The BERT model \cite{devlin2019bert} was one of the first language models to leverage the transformer architecture. The goal of this model was to create word embeddings that could be applied to general tasks. These models are commonly referred to as encoder-only language models and contrast with autoregressive models (or decoder-only) in that they use self-attention, assuming all input tokens can relate to each other bidirectionally (hence the name BERT, from \textit{Bidirectional Embeddings Representations from Transformers}).

As shown in Figure~\ref{fig:Architectures}A, tokens encoded as word embeddings $E_i$ enter a self-attention mechanism without restricting connections, i.e., without using any masking.\footnote{It is important to distinguish a common term used to describe BERT-like models, known as \textit{masked language models}. In this case, BERT pre-trains by "masking" a set of tokens. However, this token masking does not correspond to masking the attention matrices but rather to replacing a token with a special label that allows predicting the token at an arbitrary position.} The BERT architecture uses multiple attention heads, allowing each head to specialize in learning particular types of relationships by learning appropriate attention weights, as shown in \cite{clark-etal-2019-bert}.

\subsection{GPT and Masked Attention}

The GPT model \cite{dai-etal-2019-transformer} is based on masking subsequent tokens in a natural language sentence. In Figure~\ref{fig:Architectures}B, token connections are limited to previous elements only. Like BERT, GPT is applied to language models but relies solely on the decoder of a transformer, predicting a token given the history of previous tokens. These models are also known as autoregressive models and can be thought of as generative models, as they are commonly used for natural language generation.

\subsection{T5 and Encoder-Decoder Attention}

Language models based on the T5 architecture \cite{raffel2020exploring} fully leverage the transformer architecture, integrating an encoder (based on self-attention) and a decoder (based on masked attention). The interaction between encoder and decoder is done through encoder-decoder attention. This enables the integration of input text with output text, allowing tasks such as machine translation or text summarization. This type of architecture functions similarly to the one proposed by Bahdanau et al. \cite{bahdanau2014neural}, although the latter uses recurrent layers in both encoder and decoder, also integrated via encoder-decoder attention. The essential idea is shown in Figure~\ref{fig:Architectures}C.

\subsection{Unlimiformer and Sparse Attention}

The Unlimiformer architecture \cite{bertsch2023unlimiformerlongrangetransformersunlimited} is used for text processing, but unlike other architectures, it uses a version of sparse attention where each attention head in the architecture considers only blocks of $k$ tokens selected via $k$-nearest neighbors. This set of neighbors forms the context to which attention layers are applied, while the other tokens are masked. Figure~\ref{fig:Architectures}D illustrates this idea for $k=2$. As seen, different attention heads can take different $k$-sized contexts, allowing each head to specialize in different regions. Likewise, each head can model specific sets of relations.

\begin{figure}
\centering
\includegraphics[width=0.85\linewidth]{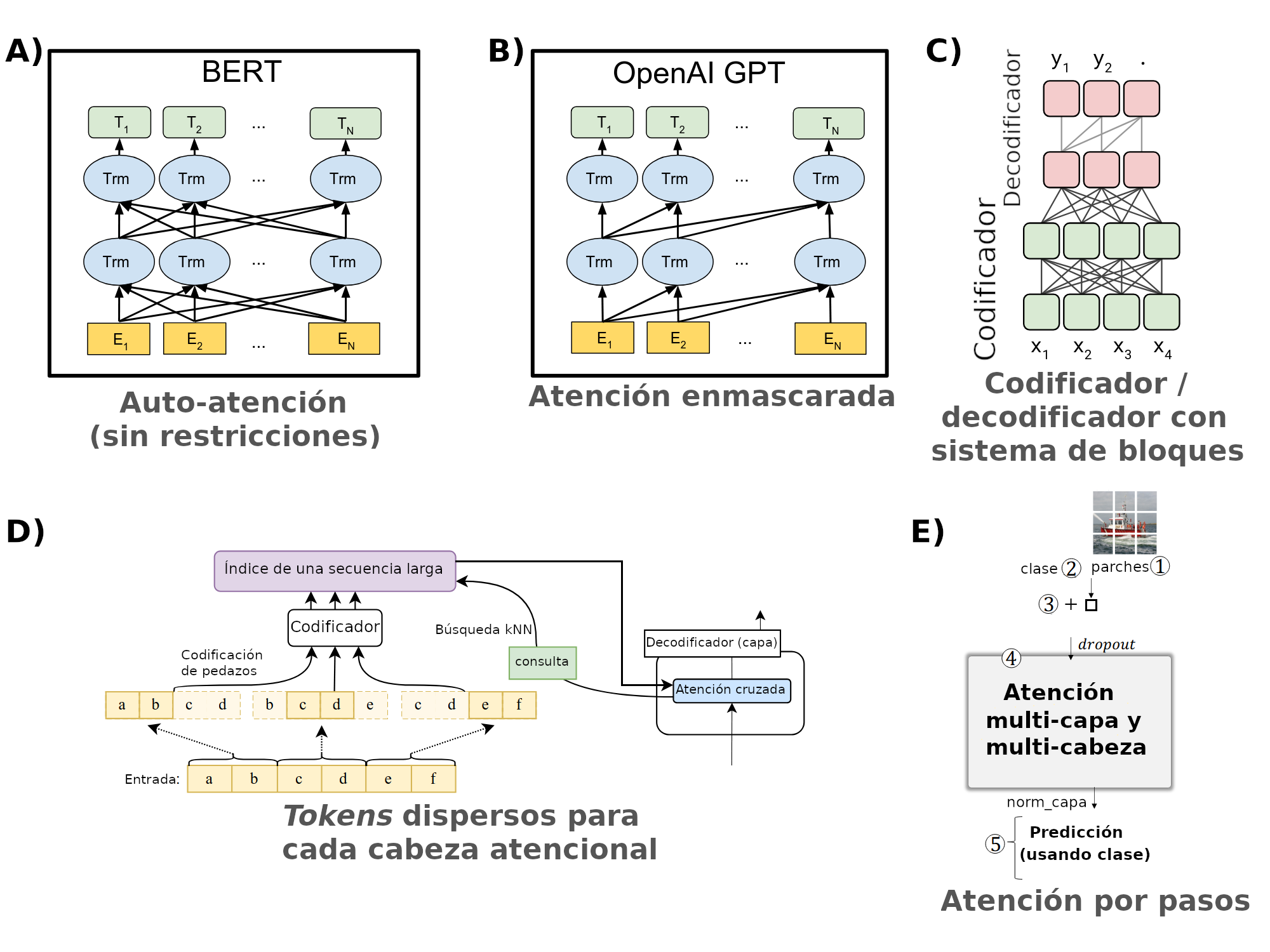}
\caption{Deep architectures using attention mechanisms: A) BERT (adapted from \cite{devlin2019bert}), B) GPT (adapted from \cite{dai-etal-2019-transformer}), C) T5 (adapted from \cite{raffel2020exploring}), D) Unlimiformer (adapted from \cite{bertsch2023unlimiformerlongrangetransformersunlimited}), E) ViT (adapted from \cite{Li_etal_ViT_explanation_attempt})}
\label{fig:Architectures}
\end{figure}

\subsection{ViT and Stride Attention}

The proposal of vision transformers or ViT \cite{dosovitskiy2021imageworth16x16words,Li_etal_ViT_explanation_attempt} is a transformer architecture for image processing. The idea is to treat images as token sequences by converting image patches, typically of size $16 \times 16$ pixels, into vectors that are then processed as a sequence through multiple attention heads. To capture spatial relationships in images via these patches, the ViT architecture employs stride attention to limit the scope of relationships that patches can have. This is visually illustrated in Figure~\ref{fig:Architectures}.

\section{Code for Attention Layers} \label{app:Code}

Below we present code implementations for the attention layers discussed in this article. These codes are written in Python using the PyTorch library. \footnote{A more complete version of the transformer code can be found at
\url{https://victormijangosdelacruz.github.io/MecanismosAtencion/}.}
All implementations linearly project the data to queries ($Q$), keys ($K$), and values ($V$).

\subsection{Self-Attention}

Self-attention is implemented via a dot product between $Q$ and $K$, followed by a softmax function:

\begin{verbatim}
    class SelfAttention(nn.Module):
        def __init__(self, d_model):
            super(SelfAttention, self).__init__()
            self.d_model = d_model
            self.Q = nn.Linear(d_model, d_model, bias=False)
            self.K  = nn.Linear(d_model, d_model, bias=False)
            self.V  = nn.Linear(d_model, d_model, bias=False)
            
        def forward(self, x):
            query,key,value = self.Q(x),self.K(x),self.V(x)
            scores = torch.matmul(query, key.T)/np.sqrt(self.d_model)
            p_attn = torch.nn.functional.softmax(scores, dim = -1)
            Vs = torch.matmul(p_attn, value).reshape(x.shape)
            
            return Vs, p_attn
\end{verbatim}

\subsection{Masked Attention}

For masked attention, the function \texttt{masked\_fill} is used with a large negative value $-1e9$ so that the softmax returns 0s in these entries. The function \texttt{masking} creates a lower-triangular masking matrix:

\begin{verbatim}
\begin{verbatim}
    class MaskAttention(nn.Module):
        def __init__(self, d_model):
            super(MaskAttention, self).__init__()
            self.d_model = d_model
            self.Q = nn.Linear(d_model, d_model, bias=False)
            self.K = nn.Linear(d_model, d_model, bias=False)
            self.V = nn.Linear(d_model, d_model, bias=False)
    
        def forward(self, x):
            query, key, value = self.Q(x), self.K(x), self.V(x)
            scores = torch.matmul(query, key.T)/np.sqrt(self.d_model)
            mask  = self.masking(x)
            scores = scores.masked_fill(mask == 0, -1e9)
            att = nn.functional.softmax(scores, dim=-1)
            h = torch.matmul(att, value)
    
            return h, att
    
        def masking(self, x):
            n = x.size(0)
            subsequent_mask = np.triu(np.ones((n, n)), k=1).astype('uint8')
            
            return torch.from_numpy(subsequent_mask) == 0
\end{verbatim}

\subsection{Encoder-Decoder Attention}

Encoder-decoder attention uses encoder elements as keys $K$ and values $V$, while the decoder provides the queries $Q$:

\begin{verbatim}
    class EncDecAttention(nn.Module):
        def __init__(self, d_model):
            super(EncDecAttention, self).__init__()
            self.d_model = d_model
            self.Q = nn.Linear(d_model, d_model, bias=False)
            self.K  = nn.Linear(d_model, d_model, bias=False)
            self.V  = nn.Linear(d_model, d_model, bias=False)
            
        def forward(self, x, encoder):
            query,key,value = self.Q(x),self.K(encoder),self.V(encoder)
            scores = torch.matmul(query, key.T)/np.sqrt(self.d_model)
            p_attn = torch.nn.functional.softmax(scores, dim = -1)
            h = torch.matmul(p_attn, value).reshape(x.shape)
            
            return h, p_attn
\end{verbatim}

\subsection{Stride Attention}

Stride attention is similar to masked attention, but the \texttt{masking} function also masks values outside the stride window:

\begin{verbatim}
    class SparseAttention(nn.Module):
        def __init__(self, d_model, stride=3):
            super(SparseAttention, self).__init__()
            self.d_model = d_model
            self.stride = stride
            self.Q = nn.Linear(d_model, d_model, bias=False)
            self.K = nn.Linear(d_model, d_model, bias=False)
            self.V = nn.Linear(d_model, d_model, bias=False)
    
        def forward(self, x):
            query, key, value = self.Q(x), self.K(x), self.V(x)
            scores = torch.matmul(query, key.T)/np.sqrt(self.d_model)
            mask  = self.masking(x)
            scores = scores.masked_fill(mask == 0, -1e9)
            att = nn.functional.softmax(scores, dim=-1)
            h = torch.matmul(att, value)
    
            return h, att
    
        def masking(self, x):
            n = x.size(0)
            mask = np.ones((n,n))
            for i in range(0,n):
                for j in range(0,self.stride):
                    m = max(0,i-j)
                    mask[i,m] = 0
            
            return torch.from_numpy(mask) == 0
\end{verbatim}

\subsection{Graph Attention}

Finally, a graph attention mechanism is defined using masking based on an adjacency matrix that defines data neighborhoods. This mechanism can simulate other attention types depending on the provided adjacency matrix:

\begin{verbatim}
    class GraphAttention(nn.Module):
        def __init__(self, d_model):
            super(GraphAttention, self).__init__()
            self.Q = nn.Linear(d_model, d_model, bias=False)
            self.K = nn.Linear(d_model, d_model, bias=False)
            self.V = nn.Linear(d_model, d_model, bias=False)
    
            self.d_model = d_model
            
        def forward(self, x, adj):       
            query, key, value = self.Q(x), self.K(x), self.V(x)        
            scores = torch.matmul(query, key.T)/np.sqrt(self.d_model)
            scores = scores.masked_fill(adj == 0, -1e9)
            p_attn = nn.functional.softmax(scores, dim = -1)
            h = torch.matmul(p_attn, value)
            
            return h, p_attn 
\end{verbatim}

\end{document}